\documentclass{article}

\usepackage{microtype}
\usepackage{graphicx}
\usepackage{subfigure}
\usepackage{booktabs} %

\usepackage{hyperref}

\usepackage{amsmath,amssymb,amsfonts, amsthm}
\usepackage{cleveref}
\usepackage{algorithm}
\usepackage[noend]{algpseudocode}
\usepackage{xcolor}
\usepackage{multirow}
\usepackage{tikz}
\let\emptyset\varnothing        %
\newcommand{\CO}{\mathcal{O}}   %
\theoremstyle{definition}       
\newtheorem*{theorem}{Theorem}
\newtheorem{definition}{Definition}
\newtheorem{corollary}{Corollary}
\newtheorem{lemma}{Lemma}
\newcommand*\circled[1]{\tikz[baseline=(char.base)]{\node[shape=circle,fill,inner sep=1pt] (char) {\textcolor{white}{#1}};}}

\definecolor{mitred}{RGB}{150, 44, 54}

\usepackage[accepted]{mlsys2021}

\mlsystitlerunning{IOS: Inter-Operator Scheduler for CNN Acceleration}

\begin{document}

\twocolumn[
\mlsystitle{IOS: Inter-Operator Scheduler for CNN Acceleration}

\mlsyssetsymbol{equal}{*}

\begin{mlsysauthorlist}
\mlsysauthor{Yaoyao Ding\textsuperscript{*}}{toronto,vi}
\mlsysauthor{Ligeng Zhu}{mit}
\mlsysauthor{Zhihao Jia}{cmu}
\mlsysauthor{Gennady Pekhimenko}{toronto,vi}
\mlsysauthor{Song Han}{mit}
\end{mlsysauthorlist}

\mlsysaffiliation{toronto}{University of Toronto}
\mlsysaffiliation{mit}{Massachusetts Institute of Technology}
\mlsysaffiliation{cmu}{Carnegie Mellon University}
\mlsysaffiliation{vi}{Vector Institute}

\mlsyscorrespondingauthor{Song Han}{songhan@mit.edu}

\mlsyskeywords{Deep Neural Network, Scheduling, Parallelization, Inference, Acceleration}

\vskip 0.3in

\begin{abstract}

To accelerate CNN inference, existing deep learning frameworks focus on optimizing {\em intra-operator} parallelization. However, a single operator can no longer fully utilize the available parallelism given the rapid advances in high-performance hardware, resulting in a large gap between the peak performance and the real performance. 
This performance gap is more severe under smaller batch sizes. 
In this work, we extensively study the parallelism \textit{between} operators and propose Inter-Operator Scheduler (IOS) to automatically schedule multiple operators' parallel execution through a novel dynamic programming algorithm. 
IOS consistently outperforms state-of-the-art libraries (e.g., TensorRT) by $1.1$ to $1.5\times$ on modern CNN benchmarks.
The code to reproduce each experiment is available at: {\footnotesize\url{https://github.com/mit-han-lab/inter-operator-scheduler}}.

\end{abstract}
] %

\printAffiliationsAndNotice{\textsuperscript{*}Work done while interning at MIT HAN Lab.}  %

\section{Introduction}

Convolutional neural networks (CNNs) have achieved state-of-the-art performance across many tasks, including computer vision~\cite{krizhevsky2012alexnet, he2016deep}, machine translation~\cite{sutskever2014seq2seq, devlin2018bert}, and game playing~\cite{mnih2013playing,silver2016mastering}.
The success comes at the cost of growing computational requirements. The high demand for computation makes efficient inference more critical in real deployment~\cite{han2015deep,chen2018tvm,taso}.

A common practice to improve inference efficiency is parallelization.
Deep learning frameworks such as Tensorflow~\cite{abadi2016tensorflow} and Pytorch \cite{paszke2017pytorch} exploit {\em intra-operator parallelism}, which parallelizes arithmetic operations within a {\em single} CNN operator (e.g., convolution).
\begin{figure}[!ht]
    \hspace{15pt}
    \includegraphics[width=0.8\linewidth]{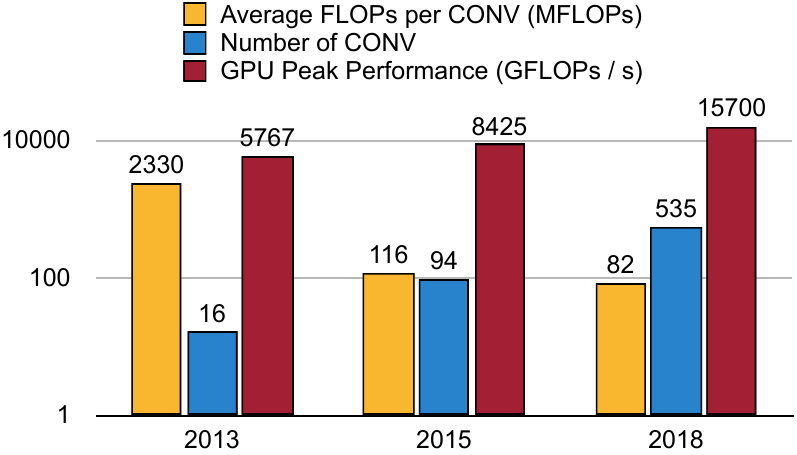}\vspace{-2pt}
    \caption{The trends of average computation per convolution, number of convolutions in a CNN and hardware peak performance. Device peek performance increases while average computation per convolution decreases, leading to a larger utilization gap. VGGNet and GTX 980Ti, Inception V3, and GTX 1080, NASNet and Tesla V100 are chosen as representatives for 2013, 2015, and 2018 respectively. All FLOPs are measured for single precision.
    }
    \label{fig:comp_change}
    \vspace{-15pt}
\end{figure}
However, due to the rapid advances in high-performance hardware, intra-operator parallelism is no longer sufficient to obtain efficient resource utilization. 
As shown in Figure~\ref{fig:comp_change}, the peak FP32 performance of a GPU has increased from 5.8 TFLOPs/s in 2013 to 15.7 TFLOPs/s in 2018 (shown in red). 
NVIDIA Tesla A100 even reaches a peak FP32 performance of 19.5 TFLOPs/s.

Meanwhile, there is a recent trend in CNN design to replace a single branch of convolutions with multiple branches of convolutions, which is advantageous due to increased model capacity under a fixed computation budget \cite{szegedy2016rethinking,zoph2018learning,xie2019exploring}. As a result, the number of convolutions grows while the computation FLOPs in each convolution becomes smaller. For example, the average floating-point operations (FLOPs) per convolution has decreased from 2330 MFLOPs/kernel in VGG to 82 MFLOPs/kernel in NASNet. This exacerbates the device's under-utilization problem.

\begin{figure*}[ht]
    \centering
    \includegraphics[width=1.0\textwidth]{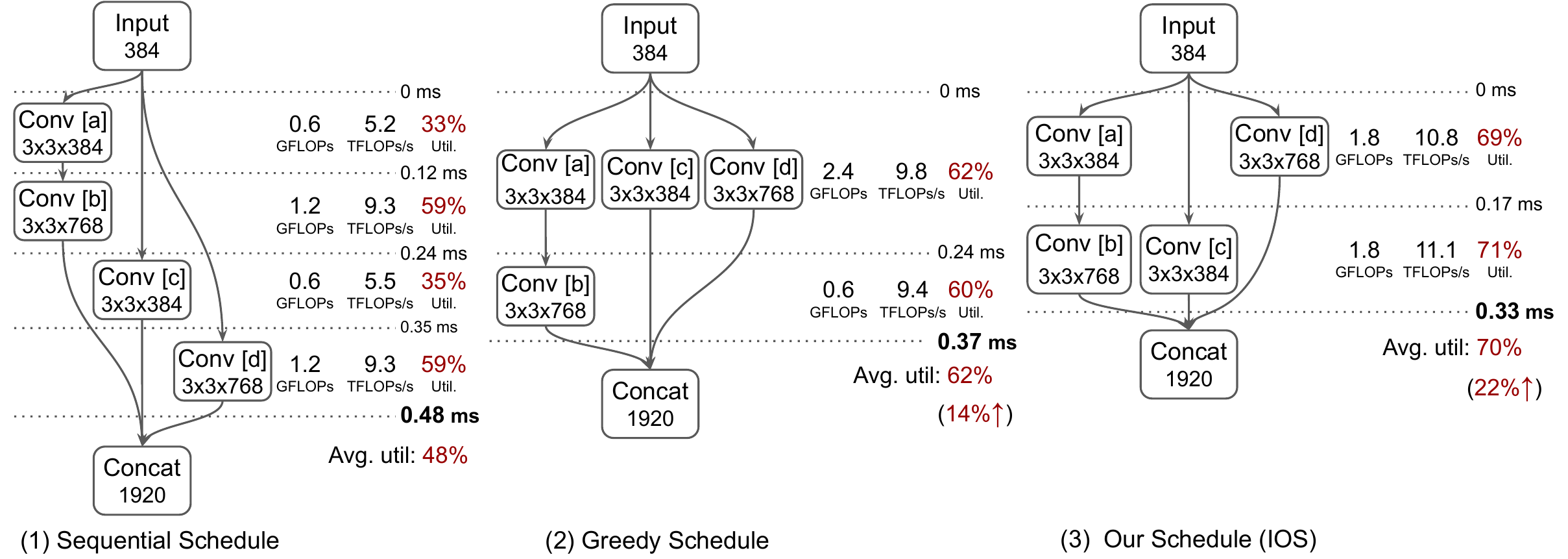}
    \vspace{-18pt}
    \caption{Different execution schedules for a computation graph on NVIDIA Tesla V100 GPU.  Operators scheduled to run in parallel are placed at the same level between two dotted lines called a \textit{stage}. Computation (GFLOPs), performance (TFLOPs/s), and hardware utilization (\%) for each stage are profiled on the right. Both sequential and greedy schedules result in low resource utilization (48\%-62\%) and high latency (0.37-0.48ms).
    Our schedule yields higher utilization (70\%) and lower latency (0.33ms).} 
    \label{fig:intro_overview}
    \vspace{-5pt}
\end{figure*}
To address this problem,
recent work explores {\em inter-operator parallelism} by executing multiple CNN operators in parallel guided by different heuristics~\cite{tang2018scheduling,jia2019optimizing, rammer}. 
For example, 
MetaFlow~\cite{jia2019optimizing} fuses multiple operators matching a specific pattern into a larger operator to increase operator granularity. 
Tang et al.~\cite{tang2018scheduling} proposes a {\em greedy} strategy that directly executes all available CNN operators on CPU to maximize resource utilization.
These approaches apply different heuristics to optimize local parallelization across a few CNN operators; however, such techniques do not lead to a \emph{globally optimal} schedule for the entire CNN architecture.
For example, given an input CNN (Figure~\ref{fig:intro_overview} (1)), a greedy schedule (Figure~\ref{fig:intro_overview} (2))
would perform convolutions [a], [c], and [d] in parallel, and run convolution [b] in a subsequent stage upon the completion of the previous stage. %

This greedy schedule is sub-optimal for two reasons. First, a greedy schedule eagerly puts more operators in the early stages (as soon as they are available for execution) and fewer operators in subsequent stages, resulting in low utilization in later stages. 
Second, executing too many operators on the device concurrently may lead to resource contention problem that hurts the performance. 
For example, as shown in Figure~\ref{fig:intro_overview}, the greedy schedule (2) suffers from resource contention problem in the first stage and low-utilization problem in the second stage, comparing to our proposed schedule (3).

Obtaining an optimized schedule to parallelize a CNN model is a challenging task. 
On the one hand, the number of schedules grows exponentially with the number of operators, making it infeasible to evaluate all possible schedules exhaustively. For example, a network with $33$ operators can have $9.2\times 10^{22}$ number of feasible schedules. 
On the other hand,
an optimal schedule also depends on hardware specifications and inference settings (e.g., batch size). 
A high-end GPU (e.g., Tesla V100) can efficiently execute a schedule with many operators in parallel, while a low-end GPU (e.g., Tesla K80) might suffer from resource contention using the same schedule. A large batch size naturally offers more intra-operator parallelism, while a small batch size has a stronger need for inter-operator parallelization. 
Therefore, given a diverse set of CNN architectures, hardware, and inference settings, it is hard to devise an efficient schedule manually for all scenarios.

To address this challenge, we propose 
IOS, an inter-operator scheduler
that accelerates CNN inference by combining intra- and inter-operator parallelism. 
We observe that different schedules share common sub-schedules; thus, IOS adopts a dynamic programming technique to explore the schedule space and finds a highly optimized schedule under low search cost. 
We evaluate IOS on modern CNN models, including Inception-V3~\cite{szegedy2016rethinking}, RandWire~\cite{xie2019exploring}, NasNet-A~\cite{zoph2018learning}, and SqueezeNet~\cite{iandola2016squeezenet}. IOS consistently outperforms the sequential schedule and greedy schedule. IOS achieves $1.1$ to $1.5\times$ inference speedup compared to existing deep learning libraries (e.g., TensorRT).
Furthermore, IOS demonstrates the necessity of \textit{customizing the scheduling policy} for different hardware and inference configurations. 
IOS can achieve up to $1.15\times$ inference speedup by customizing the scheduling recipe compared to itself with no customization.

Our contributions are summarized as follows:

\begin{itemize}
    \item We point out a major bottleneck for efficient CNN inference: existing intra-operator parallelism cannot saturate modern hardware's high parallelism, especially for recent multi-branch CNN models. Inter-operator parallelism is crucial.
    \item We propose a novel dynamic programming algorithm to find a highly optimized schedule for inter-operator parallelization. This technique is platform-agnostic and can serve as a general technique 
    for popular frameworks such as TensorFlow~\cite{tensorflow2015-whitepaper} and TVM~\cite{chen2018tvm}.
    \item We apply IOS to various hardware and inference settings and show that the different configurations require different schedules. We can automatically customize the scheduling policy for different hardware and inference configurations.
    The specialized schedules 
    consistently outperform existing deep learning libraries with $1.1$ to $1.5\times$ measured speedup in inference.
\end{itemize}

\section{Background and Related Work}

\textbf{CNN Design.} 
Several lightweight design primitives have been recently introduced to improve the efficiency of CNNs.
Examples include SequeezeNet~\cite{iandola2016squeezenet}, MobileNet~\cite{sandler2018mobilenetv2} and ShuffletNet~\cite{zhang2018shufflenet}.
However, such design patterns cannot fully utilize the hardware. Hardware under-utilization becomes more severe as accelerators are getting more powerful (shown in Figure~\ref{fig:comp_change}). 
On the other hand, multi-branch CNNs become a trend in model architecture design, including both manually designed networks \cite{szegedy2015going, iandola2016squeezenet, szegedy2016rethinking} and the networks discovered by neural architecture search~\cite{Cai2018PathLevelNT,zoph2018learning}. With a fixed computation budget, multi-branch CNNs use more small convolution primitives, which further amplifies the resource under-utilization problem on modern hardware.

\textbf{Intra-operator Parallelism.}  Current deep learning frameworks (e.g., TensorFlow and PyTorch) generally focus on intra-operator parallelism, which executes arithmetic operations within a {\em single} operator in parallel (e.g., tiled matrix multiplication). 
Tensorflow and PyTorch are built upon vendor-provided libraries (e.g., cuDNN), a set of DNN compute primitives heavily optimized by vendor engineers to achieve near-peak machine performance. 
However, these DNN operators are executed sequentially on a hardware device.
The degree of parallelism within an operator is limited; thus, intra-operator parallelism cannot provide sufficient parallelizable computation to feed powerful hardware devices. As a result, the hardware is often under-utilized using these frameworks.

Different from manual performance tuning, Auto-Halide~\cite{mullapudi2016auto_halide}, TVM~\cite{chen2018tvm} and Ansor~\cite{ansor} exploit intra-parallelism through automatically \textit{learning} efficient schedule for individual DNN kernels. This automation saves a large amount of engineering effort and can generate more efficient DNN kernels than the manually designed counterparts. However, still, all these libraries only focus on intra-operator parallelism but do not exploit inter-operator parallelism. 

\textbf{Inter-Operator Scheduling.}
Recent work has explored inter-operator scheduling. 
Tang et al.~\cite{tang2018scheduling} proposes a greedy heuristic approach, Graphi, that executes all available CNN operators whenever possible to saturate CPU's computation capability. 
The greedy strategy does not 
\emph{holistically} optimize the computation graph's performance, hence yields unbalanced and sub-optimal schedules. 
Rammer\cite{rammer} optimizes the execution of DNN workloads by holistically exploiting parallelism through inter- and intra- operator co-scheduling, enabling a richer scheduling space for executing a DNN model. IOS focuses on the inter-operator scheduling and leaves the intra-operator scheduling to the hardware.
Nimble\cite{nimble} is a DNN engine that supports parallel execution of DNN operators on GPU and minimizes the scheduling overhead using ahead-of-time (AOT) scheduling. The scheduling algorithm used in Nimble does not consider the latency of each operator, while IOS is a profile-based scheduler.

\textbf{Graph transformation.}
MetaFlow~\cite{jia2019optimizing} performs functional-preserving graph transformations to optimize DNN architectures. 
Merging operators with the same input enables more parallelism (a larger operator compared to two small sequential operators) and reduces accesses to GPU memories.
TASO~\cite{taso} further introduces an automated generation of substitution rules  and it explores more mathematically equivalent DNN architectures of the input one comparing to MetaFlow. 
MetaFlow and TASO consider the whole computation graph and search for highly optimized substitution strategies.
However, the inter-oprator parallelism utilized by MetaFlow and TASO is still limited as 
only the same type of operators can be merged. 

To address the large schedule space problem, IOS utilizes dynamic programming to take advantage of the common sub-schedules among different schedules. Also, IOS supports concurrent execution of different types of operators, addressing the limitation of MetaFlow and TASO.

\section{Problem Definition}
\label{sec:preliminary_definition}

This section defines the {\em schedule} in IOS and formulates the problem.

\begin{figure}[ht]
    \centering
    \includegraphics[width=1.0\linewidth]{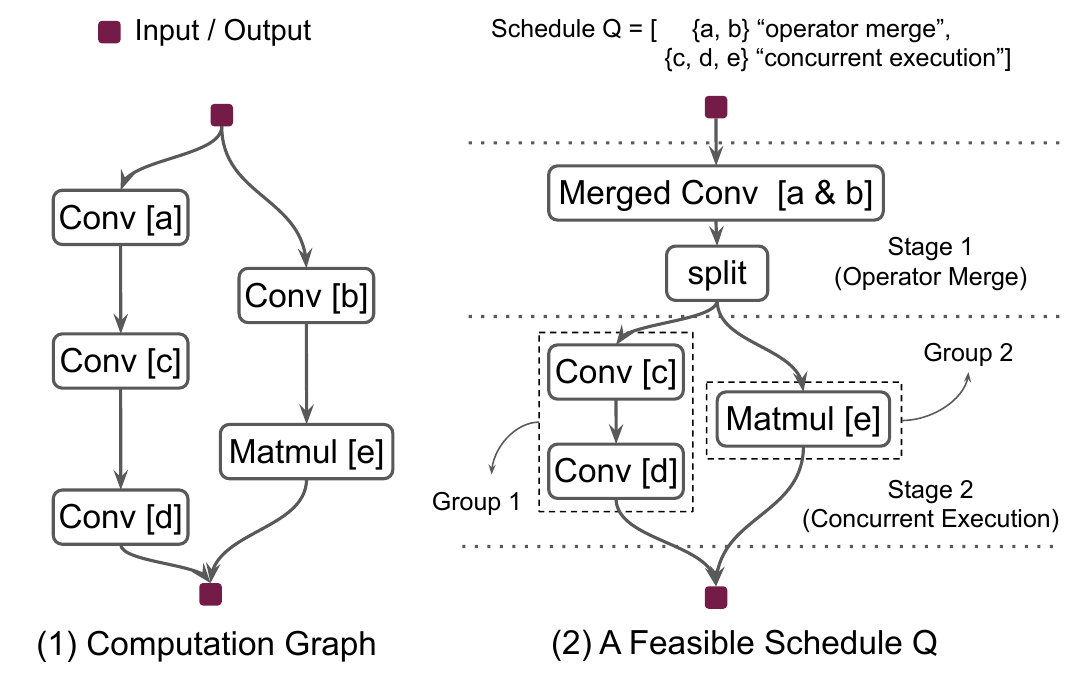}\vspace{-15pt}
    \caption{For a given \textit{computation graph} (left), a possible \textit{schedule} is shown to the right. There are five operators in the graph: convolutions a-d and matrix multiplication e. The schedule partitions operators into 2 \textit{stages}. The first stage merges convolution a and b into a larger convolution; this parallelization strategy is named \textit{operator merge}.  The second stage partitions operator c, d and e into two \textit{groups}, \{c, d\} and \{e\}. The operators in the same group are executed sequentially while different groups in the same stage are executed concurrently. This parallelization strategy is named \textit{concurrent execution}. Stages are executed one-by-one.}
    \vspace{-10pt}
    \label{fig:methods_definition}
\end{figure}

\textbf{Computation Graph.} 
A CNN is defined by a computation graph $G = (V, E)$, where $V$ is the set of operators, and $E$ is the edge set representing dependencies. A computation graph is a directed acyclic graph (DAG). Each operator in the graph represents an operator such as convolution and matrix multiplication. Each edge $(u,v)$ is a tensor that is an output of operator $u$, and an input of operator $v$. Figure~\ref{fig:methods_definition} (1) shows the computation graph of a simple CNN. 

\textbf{Stage.} To take advantage of inter-operator parallelism in a CNN architecture, its computation graph is partitioned into multiple stages. Stages are executed sequentially and the operators in the same stage are executed according to a certain parallelization strategy (see below).
Figure~\ref{fig:methods_definition} (2) shows a possible schedule that partitions the input graph into two stages, where the first stage contains operator a and b, and the second stage contains operator c, d, and e.
The parallelization strategy is discussed below.

\textbf{Parallelization Strategy.} 
Each stage adopts one of the following two parallelization strategies: \textit{operator merge} and \textit{concurrent execution}. 
ISO considers both of them and automatically picks the more efficient one for each stage.
The choice depends on operator types, input tensor shapes, and the hardware device to perform CNN computations. 

To be eligible for \textit{operator merge}, 
the operators' type must be the same while the hyperparameters can be different.
For example, two convolutions with the same stride but different kernel sizes can be merged. The smaller kernel will be padded with zeros to fit the large kernel, so we can stack their kernels together. 
In Figure \ref{fig:methods_definition} (1), if \texttt{Conv[a]} has 128 3x3 kernels while \texttt{Conv[b]} has 256 3x3 kernels, we can stack their kernels together and replace \texttt{Conv[a]} and \texttt{[b]} by a \texttt{Merged Conv[a\&b]} with 384 3x3 kernels. 
Besides increasing parallelism, it also reduces the memory accesses to the input tensor from twice to only once. A split operator is required to partition the merged convolution's output to recover the original outputs of \texttt{Conv[a]} and \texttt{Conv[b]}.

Under \textit{concurrent execution}, the operators in the stage are partitioned into disjoint {\em groups}. More specifically, 
if two operators are connected by an edge, they are partitioned into the same group.
Different groups within the same stage are executed concurrently, while the operators within the same group are executed sequentially.
IOS considers simultaneous executions of operators with {\em different} types.
In the second stage of Figure~\ref{fig:methods_definition} (2), the three operators are partitioned into two groups. The first group contains operators \texttt{Conv[c]} and \texttt{Conv[d]} while the second group contains operator \texttt{Matmul[e]}. The two groups are executed concurrently while \texttt{Conv[c]} and \texttt{Conv[d]} are executed sequentially in their group.

\textbf{Schedule.} We define a \textit{schedule} $Q$ of a computation graph $G$ as 
$
    Q = \{(S_1, T_1), (S_2, T_2), \dots, (S_k, T_k)\},
$
where $S_i$ is the set of operators in the $i$th stage and $T_i$ is the corresponding parallelization strategy, either ``concurrent execution'' or ``operator merge''. For example, the schedule for Figure~\ref{fig:methods_definition} (2) is: {$Q =$ $\{(\{a, b\}, \text{operator merge})$, $(\{c, d, e\}, \text{concurrent execution})\}$}. The schedule $Q$ executes the network from the first stage $(S_1, T_1)$ to the last stage $(S_k, T_k)$ sequentially.
$S_i$ may contain only one operator if it is the best choice (e.g., a very large operator that saturates the entire GPU). %

\textbf{Problem Formulation.} Let $c$ be a cost function defined on a computation graph $G$ and schedule $Q$. We aim to find a schedule $Q^*$ to minimize the cost function for a given computation graph $G$, i.e., $Q^* = \text{argmin}_{Q}c(G,Q)$. 
In this work, the cost function $c(G, Q)$ is defined as the latency of running $G$ following schedule $Q$.

\section{Methods}
This section introduces our Inter-Operator Scheduler (IOS) in three parts. 
Section~\ref{sec:ios} elaborates the IOS design in details. 
Section~\ref{sec:method_algorithm_complexity} analyzes the time complexity of IOS.
Finally, Section~\ref{sec:method_pruning_strategy} introduces the pruning optimizations to reduce the search time of IOS.

\subsection{Inter-Operator Scheduler (IOS)}
\label{sec:ios}
\label{sec:method:dynamic_programming_scheduler}

\begin{figure}[ht]
    \centering
    \includegraphics[width=1.0\linewidth]{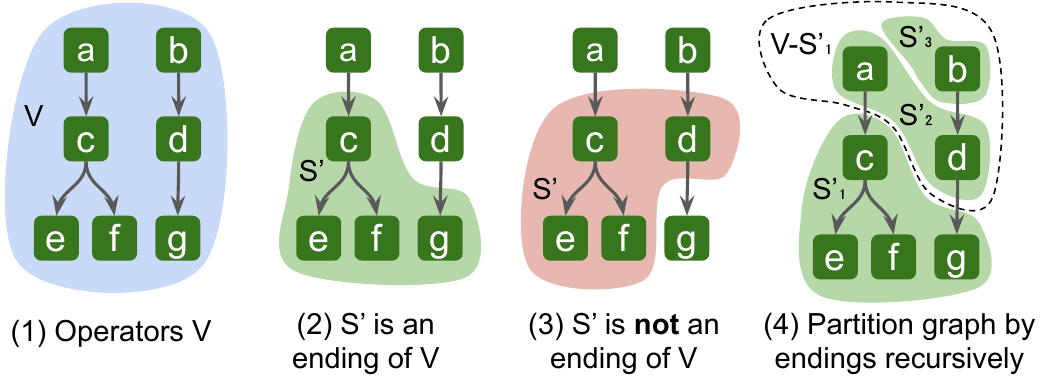}\vspace{-15pt}
    \caption{The illustration of {\em ending}. (1) shows all the operators $V$. $S'$ in (2) is an ending of $V$. However, $S'$ in (3) is not an ending of $V$ because there is an edge from d to g (from $S'$ to $V-S'$). We can partition a graph by selecting an ending for remaining operators recursively, as shown in (4), where $S'_1$ is an ending of $V$ while $S'_2$ is an ending of $V-S'_1$.}
    \label{fig:methods_ending}
\end{figure}

To find an optimized schedule for a CNN architecture, we first partition its computation graph $G = (V, E)$ into $V-S'$ and $S'$, where all edges between $V-S'$ and $S'$ start from $V-S'$ and end in $S'$.
Such $S'$ is called an \textit{ending} of $V$, as illustrated in Figure~\ref{fig:methods_ending}. 
There can be many endings of $V$. 
The last stage's operators in $V$'s optimal schedule must be an ending of $V$. We can enumerate the ending $S'$ of $V$ and convert the original problem to a sub-problem that finds the optimal schedule for $V-S'$.
The whole graph can be scheduled by applying the partition recursively.

Let $\text{cost}[S]$ be the latency of an optimal schedule for $S$. Let $\text{stage\_latency}[S']$ be the latency of stage $(S', T)$ where $T$ is the better parallelization strategy for $S'$ among the two possible ones. We formalize this idea as follows,
$$
    \text{cost}[S] = \min_{S'}(\text{cost}[S-S'] + \text{stage\_latency}[S']),
$$
where $S'$ is an ending of $S$, and $\text{cost}[\emptyset] = 0$. Finally, $\text{cost}[V]$ is the latency of an optimal schedule for the entire computation graph $G$. To construct the optimal schedule we found, we record the corresponding $S'$ that minimizes the latency for each $S$ (i.e., $\text{cost}[S]$) in $\text{choice}[S]$.

With this general idea, we implement IOS in three functions  \textproc{InterOperatorScheduler} (L3-12), \textproc{Scheduler} (L13-22) and \textproc{GenerateStage} (L23-33), as shown in Algorithm~\ref{alg}. \textproc{InterOperatorScheduler} takes a computation graph as an input and returns the optimal schedule found by IOS. \textproc{Scheduler} is a recursive function implementing the dynamic programming algorithm to find the optimal schedule for a subset of operators in $G$. \textproc{GenerateStage} chooses a better parallelization strategy for given operators $S'$.

\begin{algorithm}[!tb]
    \small
    \caption{Inter-Operator Scheduler (IOS)}
    \label{alg}
    \begin{algorithmic}[1]
        \Statex \textbf{Input: } a computation graph $G = (V, E)$, 
        \Statex \hspace{28pt} and a schedule pruning strategy $P$
        \Statex \textbf{Output: } a schedule found by IOS
        \State Let $\text{cost}[S] = \infty$ for all $S \subseteq V$ but $\text{cost}[\emptyset] = 0$
        \State Let $\text{choice}[S] = \emptyset$ for all $S \subseteq V$
        \Function{\textcolor{mitred}{InterOpeatorScheduler}}{$G$}
            \State $V = $ all operators in computation graph $G$
            \State \Call{Scheduler}{$V$}
            \State $Q = $ empty list
            \State $S = V$
            \While{$S \neq \emptyset$}
                \State $S', T = \text{choice}[S]$
                \State Insert stage $(S', T)$ before the head of $Q$
                \State $S = S - S'$
            \EndWhile
            \State \Return the schedule $Q$
        \EndFunction
        \Function{\textcolor{mitred}{Scheduler}}{$S$} 
            \If{$\text{cost}[S] \neq \infty$}
                \State \Return $\text{cost}[S]$ %
            \EndIf
            \ForAll{ending $S'$ of $S$ satisfying pruning strategy $P$} \label{alg:for_ending}
                \State $L_{S'}, T_{S'} = $ \Call{GenerateStage}{$S'$}
                \State $L_S = $ \Call{Scheduler}{$S-S'$} $+ L_{S'}$
                \If{$L_S < \text{cost}[S]$}
                    \State $\text{cost}[S] = L_S$
                    \State $\text{choice}[S] = (S', T_{S'})$
                \EndIf
            \EndFor
            \State \Return $\text{cost}[S]$
        \EndFunction
        \Function{\textcolor{mitred}{GenerateStage}}{$S'$}
            \State Partition $S'$ into disjoint groups: $S'_1, S'_2, \dots, S'_k$.
            \State $L_{concurrent} = $ latency of parallel execution of $\{S'_i\}$
            \If{operators in $S'$ can be merged}
                \State $L_{merge} = $ latency of merged operator
            \Else
                \State $L_{merge} = \infty$
            \EndIf
            \If{$L_{concurrent} < L_{merge}$}
                \State \Return $L_{concurrent}$, ``concurrent execution"
            \Else
                \State \Return $L_{merge}$, ``operator merge"
            \EndIf
        \EndFunction
    \end{algorithmic}
\end{algorithm}

\textbf{\textproc{InterOperatorScheduler}} (L3-12) is the entry function. It takes a computation graph $G$ as an input and returns an optimized schedule $Q$. 
This function calls \textproc{Scheduler} with operators $V$ as an argument (L5).
After calling \textproc{Scheduler}, the global variable $\text{cost}[S]$ stores the latency of an optimal schedule for $S$, while $\text{choice}[S]$ stores the last stage in the corresponding optimal schedule. 
Once $\text{choice}[\cdot]$ is obtained, we can construct the schedule found by IOS (L6-11). We start with an empty list as the initial state of our schedule (L6) and let $S$ be all the operators in $G$.  We inquire about the last stage $(S', T)$ of $S$ by $\text{choice}[S]$ and put it at the head of the current schedule $Q$. 
We repeat this process by letting $S = S-S'$ to get the remaining operators' schedule in all previous stages (L8-11). $S = \emptyset$ indicates that we have discovered an optimized schedule $Q$ for $G$.

\textbf{\textproc{Scheduler}} (L13-22) is the core part of our algorithm. It implements the dynamic programming algorithm recursivly, taking a subset of $V$ as the state. 
It takes a set of operators $S$ as an input and returns the minimal latency for $S$ among all schedules.
Because \textproc{Scheduler} may be called multiple times with the same argument $S$, for repeated calls, we cache the previous results $\text{cost}[S]$ to avoid redundant computations (L14-15). 
To find an optimal schedule for $S$, we enumerate its last stage operators $S'$ and reduce the problem into a sub-problem for $S-S'$ (L16-21). 
We use \textproc{GenerateStage} to choose a better parallelization strategy $T_{S'}$ for $S'$ and get the latency $L_{S'}$ (L17).
$L_S$ is the minimal latency for $S$ when taking $S'$ as the last stage's operators (L18).
We enumerate all possible endings of $S$ and record the minimal latency $L_S$ and the corresponding last stage $(S', T_{S'})$ in $\text{cost}[S]$ and $\text{choice}[S]$, respectively (L19-21).

\begin{figure*}[!htb]
\centering
\includegraphics[width=0.99\textwidth]{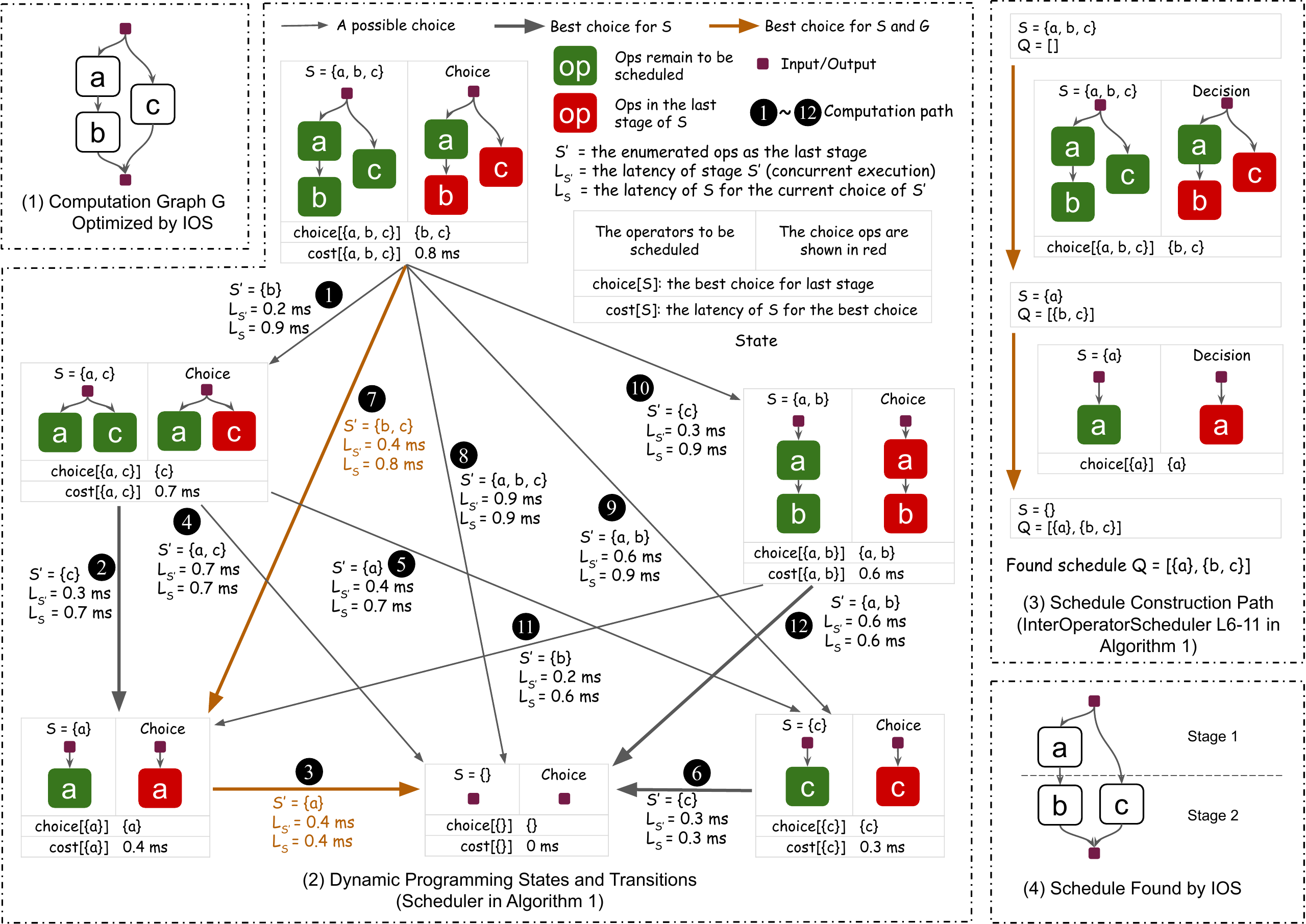}
\vspace{-5pt}
\caption{An example to illustrate how IOS finds the schedule. The computation graph to be optimized is shown in (1). It has three operators, a, b, and c, where a is followed by b, and c is independent with a and b. The states and transitions between these states are presented in (2). Here \textit{state} means the operators to be scheduled, and \textit{transition}  means the dependency between states (edges in (2)). Any path from state $S=\{a, b, c\}$ to $S = \{\}$ is corresponded with a schedule. Upon finishing the dynamic programming process (\textproc{Scheduler}), the best schedule for the computation graph can be constructed according to $\text{choice}[\cdot]$, as shown in (3). The schedule found by IOS is shown in (4). For simplicity, in this example, we only consider the concurrent execution parallelization strategy.}
\label{fig:methods_algorithm}
\vspace{-5pt}
\end{figure*}

\textbf{\textproc{GenerateStage}} (L23-33) chooses a better parallelization strategy from ``concurrent execution" and ``operator merge" for a given stage $S'$. It returns the parallelization strategy and the corresponding latency. 
It directly measures the latencies of both parallelization strategies on the hardware.
The ``concurrent execution" strategy partitions $S'$ into multiple disjoint operator groups: $S'_1, S'_2, ..., S'_k$.
Operators in different groups are executed concurrently while operators in the same group are executed sequentially. 
For the ``operator merge" strategy, if all the operators in $S'$ can be merged into a single operator (L26), we merge them 
and measure the latency of the merged operator (L27). Otherwise, we set $L_{merge}$ to infinity to force ourselves to choose the ``concurrent execution" strategy. 

Figure~\ref{fig:methods_algorithm} demonstrates how IOS discovers an optimized strategy for an input graph with three operators a, b, and c.
Figure~\ref{fig:methods_algorithm} (2) shows the dynamic programming process, the \textproc{Scheduler} in Algorithm \ref{alg}.
For simplicity, we only consider the concurrent execution parallelization strategy.
There are six \textit{states} (the operators to be scheduled, $S$) in the process. 
We start with all the operators in the computation graph as state $S = \{a, b, c\}$ (L5).
For each state $S$, \textproc{Scheduler} enumerates the ending $S'$ of $S$. 
The latency of $S$ contains two parts: latency of $S'$ as a stage and the latency of $S-S'$.
While the result of $S'$ is measured on the device directly ($L_{S'}$),
the optimal latency of $S-S'$  is obtained via solving the sub-problem recursively.
\circled{1} to \circled{\footnotesize{12}} shows the computation path. 
Note that IOS memorizes the results for each calculated state to avoid redundant computations. 
Thus, step \circled{7} visits state $S = \{a\}$, and IOS gets its latency directly (L15) 
because it has been previously visited by step \circled{2}.
\textproc{Scheduler} stores the latency ($\text{cost}[\cdot]$) and last stage ($\text{choice}[\cdot]$) in its optimal schedule.
We can construct the best schedule for the whole computation graph using $\text{choice}[\cdot]$, as shown in Figure~\ref{fig:methods_algorithm} (3). 
An optimal schedule found by IOS is shown in (4).
Both stages take ``concurrent execution" as the parallelization strategy.

\subsection{Time Complexity of IOS}
\label{sec:method_algorithm_complexity_and_pruning_strategy}
\label{sec:method_algorithm_complexity}
In this subsection, we analyze the time complexity of IOS.
We take set operations (L18, L24)
and latency measurement operations (L25, L27) as atom operations to make the analysis clear.
To analyze the time complexity of IOS, we count the number of executions of L17-21, since they dominate the whole algorithm's execution. This number equals the number of edges (i.e., transitions) in Figure~\ref{fig:methods_algorithm} (2). Furthermore, it is equivalent to count the number of pairs $(S, S')$,
where $S$ is a state and $S'$ is an ending of $S$. 
Here we define the width of a directed acyclic graph and provide the time complexity of Algorithm~\ref{alg}.

\begin{definition}[Width $d$ of a DAG]
We call $d$ the {\em width} of a directed acyclic graph $G$ if we can find at most $d$ operators in $G$ such that there is no path connecting any two of them.
\end{definition}

\begin{theorem}[Time Complexity of IOS]
The time complexity of Inter-Operator Scheduler (IOS) is $\CO(\binom{n/d+2}{2}^d)$, which can be relaxed to $\CO((\frac{n}{d}+1)^{2d})$, where $n$ is the number of operators in the computation graph and $d$ is its width.
\end{theorem}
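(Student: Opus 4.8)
The plan is to reduce the running-time count to a purely combinatorial counting problem on the poset induced by the DAG, and then bound that count using Dilworth's theorem together with a convexity estimate. As the excerpt already establishes, the running time is governed by the number of pairs $(S, S')$ in which $S$ is a state visited by \textproc{Scheduler} and $S'$ is an ending of $S$. So the entire task reduces to bounding this number by $\CO(\binom{n/d+2}{2}^d)$.

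First I would pin down exactly which subsets of $V$ can occur as states. Define a partial order on the operators by reachability: $u \preceq v$ iff there is a directed path from $u$ to $v$ in $G$. I claim every visited state $S$ is a down-set (order ideal) of this poset, i.e. closed under predecessors. This follows by induction on the recursion: the initial state $V$ is trivially a down-set, and if $S$ is a down-set and $S'$ is an ending of $S$ (no edges from $S'$ into $S - S'$), then $S - S'$ is again closed under predecessors, since any predecessor of a node in $S - S'$ lies in $S$ and cannot lie in $S'$ without creating a forbidden edge. Conversely, for a fixed down-set $A = S$, the endings $S'$ correspond bijectively to down-sets $B = A - S'$ contained in $A$: the complement $A - B$ fails to be an ending precisely when some edge runs from $A - B$ into $B$, which is impossible once $B$ is predecessor-closed. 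Hence the number of pairs $(S, S')$ is at most the number of nested pairs of down-sets $B \subseteq A$ in the poset.

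Next I would bound that number of nested pairs. By the definition of width, the largest antichain of the reachability poset has size $d$, so Dilworth's theorem decomposes $V$ into $d$ chains $C_1, \dots, C_d$ of sizes $n_1, \dots, n_d$ with $\sum_i n_i = n$. Because the $C_i$ partition $V$, a down-set $A$ is determined by the tuple of intersections $A \cap C_i$, and each intersection is a down-set of a chain, hence a prefix of length $a_i \in \{0, 1, \dots, n_i\}$. A nested pair $B \subseteq A$ therefore maps injectively to a tuple $((b_1, a_1), \dots, (b_d, a_d))$ with $0 \le b_i \le a_i \le n_i$, and the number of admissible pairs $(b_i, a_i)$ on chain $i$ is $\binom{n_i + 2}{2}$. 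Multiplying over the chains bounds the total count by $\prod_{i=1}^d \binom{n_i + 2}{2}$.

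Finally I would maximize this product subject to $\sum_i n_i = n$. Since $f(x) = \ln \binom{x+2}{2} = \ln(x+1) + \ln(x+2) - \ln 2$ has $f''(x) = -(x+1)^{-2} - (x+2)^{-2} < 0$, it is concave, so Jensen's inequality yields $\sum_i f(n_i) \le d\, f(n/d)$, i.e. $\prod_i \binom{n_i+2}{2} \le \binom{n/d + 2}{2}^d$. The stated relaxation then follows from the elementary inequality $\binom{m+2}{2} = \tfrac{(m+1)(m+2)}{2} \le (m+1)^2$ applied with $m = n/d$. The main obstacle I anticipate is the first step: showing that the states are exactly down-sets and that endings correspond bijectively to nested down-sets, since the complexity bound is only correct once the transitions are counted as pairs of down-sets rather than arbitrary subsets. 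The Dilworth decomposition and the concavity estimate are comparatively routine once that structural fact is in place.
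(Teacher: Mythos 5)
Your proposal is correct and follows essentially the same route as the paper's proof: your ``down-set'' characterization of the visited states is exactly the paper's Lemma that $V-S$ ends $V$, the nested-pair-of-down-sets count per chain is the paper's $\binom{c+2}{2}$ lemma, and the Dilworth decomposition and product bound are identical. The only (welcome) addition is that you justify the maximization at $c_i = n/d$ via concavity and Jensen, a step the paper asserts without proof.
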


In fact, there are computation graphs that can reach this bound, so we can not improve it without other restrictions on the schedule space. Proof can be found in Appendix~\ref{sec:appendix_time_complexity}.

\begin{table}[th]
    \centering
    \setlength{\tabcolsep}{4pt} %
    \small
    \begin{tabular}{c c c c c c c}
        \hline
        Model	& $n$ & $d$ & $\binom{n/d+2}{2}^d$ & \#$(S, S')$ & \#Schedules \\
        \\[-0.9em]
        \hline
        Inception V3& $11$    &	$6$	& $2.6\times10^4$   &   $4.9\times10^3$ &	$3.8\times10^{6}$\\
        Randwire    & $33$    &	$8$	& $3.7\times10^9$   &   $1.2\times10^6$ &	$9.2\times10^{22}$\\
        NasNet	    & $18$    &	$8$	& $5.2\times10^6$	&   $3.1\times10^5$	&   $7.2\times10^{12}$\\
        SqueezeNet	& $6$     & $3$	& $2.2\times10^2$   &	$51$	    &   $1.3\times10^2$\\
        \hline
    \end{tabular}
    \normalsize
    \caption{For the largest block of each benchmarked network, we list the number of operators $n$, the width $d$, the upper bound of transitions $\binom{n/d+2}{2}^d$, the real number of transitions \#$(S, S')$, and number of schedules.}
    \label{tab:block_info}
\end{table}
Modern convolution neural networks usually construct the network by stacking multiple blocks, making it possible to optimize each block separately. In this case,  $n$ and $d$ refers to the number of  operators within a block and the block width, rather than the full network. We list the information of the largest block for each  network benchmark in Table \ref{tab:block_info}.

The total number of feasible schedules is exponential to the number of operators (e.g., up to $9.2\times 10^{22}$ for Randwire~\cite{xie2019exploring}). Such a huge number makes it prohibitive to manually design or enumerate the schedules. 
However, by reusing the results of common sub-schedules in the schedule finding process, IOS finds the optimal schedule within $4$ hours for each network with no pruning strategy used.
The time complexity of IOS is only exponential to the width of the computation graph, which is usually very small and acceptable (e.g., $\leq 8$ in all benchmarked networks).

\subsection{Reduce the Search Time by Schedule Pruning}
\label{sec:method_pruning_strategy}

It is difficult for a dynamic programming algorithm to stop early, because it gets the best result at the very end. To reduce the search time, IOS introduces {\em schedule pruning} to reduce the exploration space by restricting the max number of groups and the max number of operators within a group.
We define the pruning strategy $P$ as a boolean function of $S$ and $S'$. We only enumerate the ending $S'$ of $S$ that satisfies the pruning strategy $P$, that is, $P(S, S') = \text{True}$ (L16 of Algorithm \ref{alg}).
The pruning strategy consists of two parameters $r$ and $s$: $P(S, S') = \text{True}$ if and only if ending $S'$ has at most $s$ groups and each group has at most $r$ operators. 
 
After applying the pruning strategy $P$, the time complexity is reduced from $\CO((\frac{n}{d}+1)^{2d})$ to $\CO((\frac{n}{d}+1)^d(r+1)^s)$. Of course, there is a trade-off between the search cost and the quality of the discovered schedule. We evaluate this trade-off in Section~\ref{sec:experiments_pruning_strategy}.

\section{Implementation Setup}
\label{sec:experiments_setup}
\label{sec:experiments_implementation}

IOS is a framework-agnostic algorithm and can be implemented in popular frameworks. 
We implement the dynamic programming scheduling algorithm in Python and the execution engine in C++. 
The latency of a stage is directly measured in the execution engine to guide the scheduling. 
The execution engine is based on vendor-provided library cuDNN~\cite{cudnn} and supports operators' parallel execution.  
To concurrently execute multiple groups of operators, IOS puts different groups into different CUDA streams.
Kernels in different CUDA streams will be executed in parallel if there are enough computation resources. 
Throughout the experiments, we use cuDNN 7.6.5, cuda 10.2, NVIDIA driver 450.51.05, and adopt TensorRT 7.0.0.11 and TVM 0.7 as baseline libraries. 
\begin{table}[ht]
    \centering
    \small
    \begin{tabular}{c c c c}
        \hline
        Networks & \#Blocks & \#Operators & Operator Type \\
        \hline
        Inception V3 & $11$ & $119$ & Conv-Relu \\
        Randwire & $3$ & $120$ & Relu-SepConv \\
        NasNet & $13$ & $374$ & Relu-SepConv \\
        SqueezeNet & $10$ & $50$ & Conv-Relu \\
        \hline
    \end{tabular}
    \caption{The CNN benchmarks. Number of blocks, number of operators and the main operator type for each network are listed in the table. Here ``Conv-Relu" means a convolution followed by a ReLU activation and ``Relu-SepConv" means ReLU activation followed by seperatble convolution.}
    \label{tab:benchmark_networks}
    \vspace{-5pt}
\end{table}

We benchmark four modern CNNs in the experiment: Inception V3~\cite{szegedy2016rethinking}, RandWire~\cite{xie2019exploring}, NasNet-A~\cite{zoph2018learning} and SqueezeNet~\cite{iandola2016squeezenet}. Table \ref{tab:benchmark_networks} shows the number of blocks, the number of operators, and the main operator type for each network. IOS supports the user-defined schedule unit. In this experiment, we take the operator type shown in the table, besides other operators such as Concat, as the basic schedule unit.
Some models (e.g., ResNet~\cite{he2016deep}) might have limited inter-operator parallelization opportunities. For example, for ResNet-50 and ResNet-34, we can only achieve 2\% to 5\% speedup by paralleling the downsample convolutions. We do not consider it as our benchmarked model in the rest of the evaluation.   

\begin{figure*}[!ht]
    \centering
    \begin{minipage}[t]{0.49\textwidth}
        \centering
        \includegraphics[width=\textwidth]{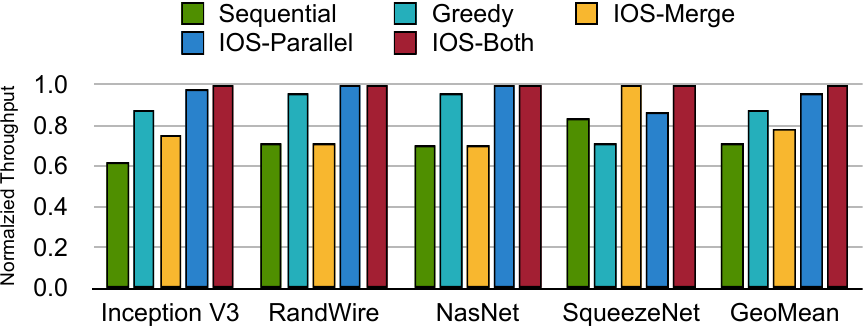}\vspace{-14pt}
        \caption{End-to-end performance comparison of different schedules across different CNNs on batch size one. The throughput is normalized to the best one for each model.}
        \label{fig:result_schedule}
    \end{minipage}%
    \hspace{7pt}
    \begin{minipage}[t]{0.49\textwidth}
        \centering
        \includegraphics[width=\textwidth]{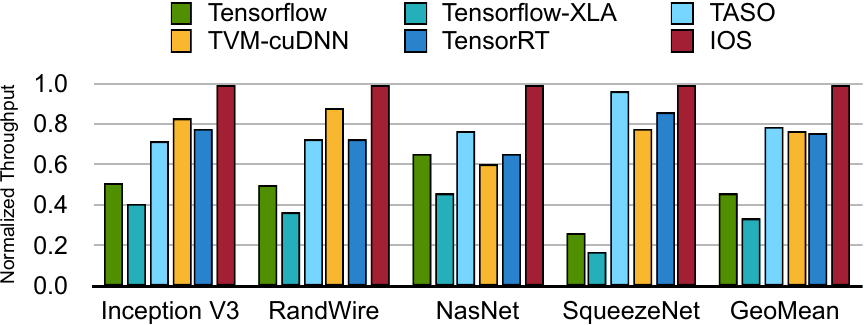}\vspace{-14pt}
        \caption{End-to-end performance comparison of
        different frameworks 
        across different CNNs on batch size one. The throughput is normalized to the best one for each model.}
    \label{fig:result_frameworks_cudnn}
    \end{minipage}
    \vspace{-5pt}
\end{figure*}   

We conduct each experiment $5$ times and report the average performance.
We adopt the schedule pruning strategy with $r=3$ and $s=8$ and conduct each experiment on NVIDIA Tesla V100 unless otherwise stated. Please refer to Appendix~\ref{sec:appendix_2080ti} for the experiments on other device. The IOS optimization cost for Inception V3 and SqueezeNet is less than $1$ minute and the IOS optimization cost for Randwire and NasNet is within $90$ minutes.

\section{Experiments}

\subsection{Comparison of Different Schedules}
\label{sec:experiments_schedules}
We first compare the inference performance among different schedules with batch size one. We compare five schedules: sequential schedule, greedy schedule, IOS-Merge schedule, IOS-Parallel schedule, and IOS-Both schedule.
The sequential schedule executes the operator one-by-one according to certain topological ordering.
The greedy schedule puts all the operators that can be executed currently in one stage, and repeats this process until all operators have been scheduled. IOS-Merge, IOS-Parallel, and IOS-Both schedules use the proposed approach to find the schedule but take different parallelization strategies. IOS-Merge only takes the ``operator merge" strategy. IOS-Parallel only takes the ``concurrent execution" strategy. 
IOS-Both considers both parallelization strategies.
All schedules are executed on IOS execute engine for a fair comparison.

Figure \ref{fig:result_schedule} shows that IOS-Both outperforms all the other four schedules. The greedy schedule gets good results on RandWire and NasNet. However, it degrades the performance of SqueezeNet because of the overhead of synchronization. Because we can not merge ``Relu-SepConv" operators in RandWire and NasNet, IOS-Merge gets the same schedule as Sequential, and IOS-Both gets the same schedule as IOS-Parallel. IOS-Both considers two parallelization strategies and outperforms all the other four schedules. In later experiments, ``IOS" refers to ``IOS-Both" by default.

\subsection{Comparison of cuDNN-based Frameworks}
\label{sec:experiments_frameworks}
For popular frameworks, there are two ways to exploit the intra-operator parallelism. Frameworks such as Tensorflow~\cite{tensorflow2015-whitepaper},  TASO~\cite{taso}, and TensorRT~\cite{tensorrt} use the vendor-provided library cuDNN. Frameworks such as TVM~\cite{chen2018tvm} and Ansor~\cite{ansor} search the tensor program schedule for each kernel. TVM also supports to call external libraries such as cuDNN to implement some kernels (e.g., convolution).  
In this subsection, we compare the performance of cuDNN-based frameworks with batch size one. Larger batch size is studied in the ablation study section.

There are five baselines: Tensorflow, Tensorflow-XLA, TASO, TVM-cuDNN, and TensorRT. Tensorflow-XLA is the tensorflow framework with XLA optimization turning on. 
TVM-cuDNN is the TVM framework that compiles a convolution neural network with cuDNN library, which would use the convolution kernel provided by cuDNN to execute convolutions. All other operators such as addition and concatenation would use their own kernels. 
For fair comparison, we only compare cuDNN-based libraries here. The comparison between TVM-AutoTune and IOS can be found in the ablation study section.
Figure \ref{fig:result_frameworks_cudnn} shows that IOS consistently outperforms all five baseline frameworks on four benchmark CNNs. IOS can achieve $1.1$ to $1.5\times$ speedup comparing to the state of the art library TASO, TVM-cuDNN, and TensorRT.

\subsection{More Active Warps Improve Utilization}
\label{sec:experiments_utilization}
\begin{figure}[!ht]
    \centering
    \includegraphics[width=0.90\linewidth]{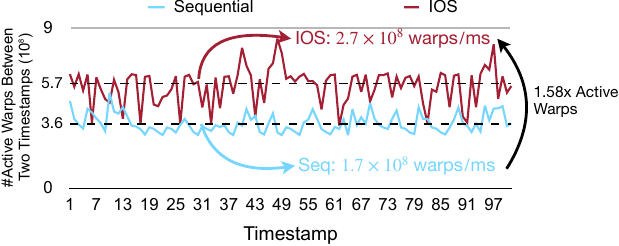}
    \vspace{-10pt}
    \caption{
    Active Warps for sequential schedule and IOS schedule. We use the model in Figure~\ref{fig:intro_overview}  in this experiment.
    }
    \vspace{-5pt}
    \label{fig:experiments_utilization}
\end{figure}
Model operators are mapped to GPU \emph{kernels} to execute. A kernel invokes a collection of \emph{threads} that are grouped into multiple \emph{thread blocks}.\footnote{We adopt the terminology used by NVIDIA.} 
Thread blocks are distributed to \emph{stream multiprocessors} (SMs). Each thread block on a SM is further partitioned into multiple \emph{warps}. A warp, as a basic execution unit, contains a fixed number of threads (e.g., 32 for NVIDIA GPU) to execute in a Single Instruction Multiple Thread (SIMT) fashion. 

A warp is considered \emph{active} from the time it is scheduled on an SM until it completes the last instruction. 
SM can hide the warps stall caused by memory accesses through fast context switching: at every cycle, each \emph{warp scheduler} will pick an eligible warp and issue instructions. If no eligible warp is available for a warp scheduler, the computation resources are underutilized. 
Increasing the number of active warps is an effective approach to increase the likelihood of having eligible warps to execute at each cycle.
Thus, it is \emph{crucial} to increase the number of active warps. 
Figure~\ref{fig:experiments_utilization} shows the number of active warps on the whole GPU throughout the repeated execution of both the IOS and the Sequential schedule, sampled using NVIDIA's CUPTI profiling toolset every 2.1 ms. IOS schedule achieves $58\%$ more active warps on average compared to the Sequential schedule. This explains the reason for IOS overall performance speedup.

\section{Ablation Study}

\subsection{Schedule Pruning Reduces Search Time}
\label{sec:experiments_pruning_strategy}
\begin{figure}[!ht]
    \centering
    \includegraphics[width=0.9\linewidth]{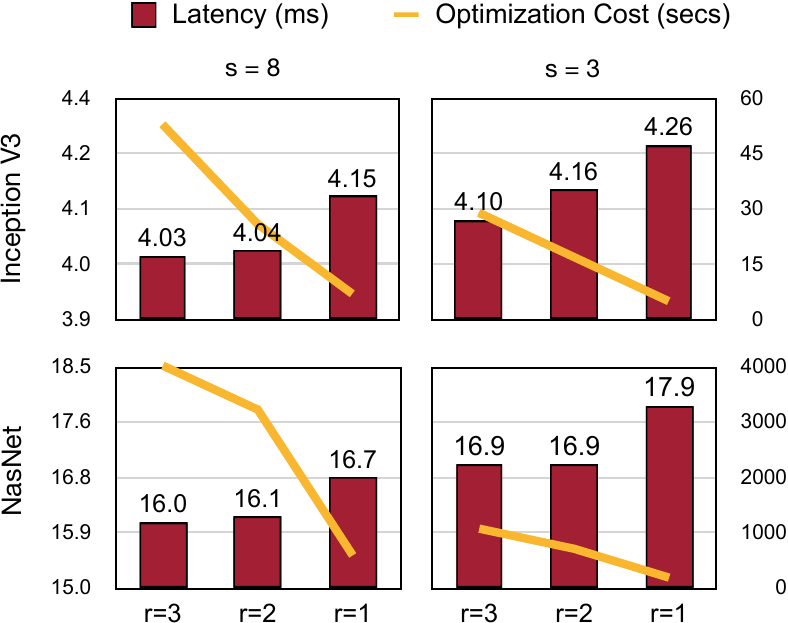}
    \vspace{-3pt}
    \caption{Trade-off between the optimized latency and the optimization cost for Inception V3 and NasNet. Two pruning strategy parameters $r$ and $s$ are used to prune the schedule space.
    $r$ limits the maximum number of operators in each group while $s$ limits the maximum number of groups in a stage. The left axis shows the optimized latency, and the right axis shows the optimization cost.}
    \label{fig:experiments_pruning}
\end{figure}

To explore the trade-off between optimized latency and optimization cost (i.e. search time), we experiment Inception V3 and NasNet with pruning strategy parameters $r = \{1, 2, 3\}$ and $s = \{ 3, 8\}$. 
As shown in Figure \ref{fig:experiments_pruning}, when $s$ and $r$ get smaller, the optimization cost decreases at the cost of larger network latency. 
This is because smaller $s$ and $r$ restrict the schedules that IOS explores, thus reduce the optimization cost and increase schedule latency. By setting $r=1$ and $s=8$, IOS still achieves $1.59\times$ and $1.37\times$ speedup for Inception V3 and NasNet, comparing to sequential schedule. Meanwhile, the optimization cost for each network is within $30$ seconds and $18$ minutes, respectively. 

\subsection{Specialized Scheduling is Beneficial}
\label{sec:experiments_specilization}
\begin{table}[ht]
    \centering
    
    \includegraphics[width=1.0\linewidth]{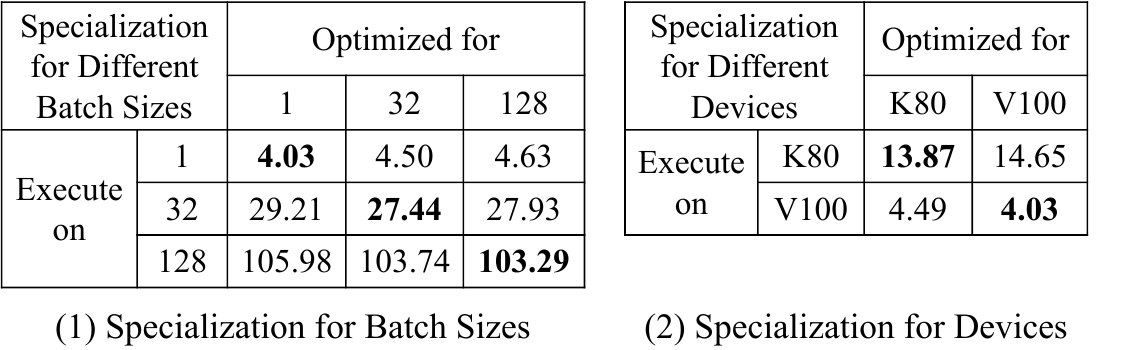}
    \vspace{-15pt}
    \caption{Latency (ms) of specialized schedules for batch size 1, 32 and 128, and specialized schedules for NVIDIA Tesla K80 and V100. The best performance is achieved when the schedule is specialized for each batch size and device. Each row is the batch size or device that the model is executed on. Each column is the batch size or device that IOS optimized for. InceptionV3 is used as a benchmark.}
    \label{tab:experiments_specialization_batchsize}
\end{table}
Different workloads (e.g. network with different batch sizes) have different computation features; thus it is necessary to specialize the schedule for different workloads. We optimize Inception V3 with batch size 1, 32 and 128. Then we execute the network with these schedules on batch size 1, 32 and 128 separately. In Table \ref{tab:experiments_specialization_batchsize} (1), the numbers in a row represents the latency executed with the same batch size but using schedules optimized for different batch sizes. 
The specialized schedule for each batch size achieved the best result.  
To explore the specialization for devices, we also optimize the network on both NVIDIA Tesla K80 and V100 with batch size one. Table \ref{tab:experiments_specialization_batchsize} (2) shows that the specialized schedule for each device also achieved better results.

\begin{figure}[h]
    \centering
    \includegraphics[width=1.0\linewidth]{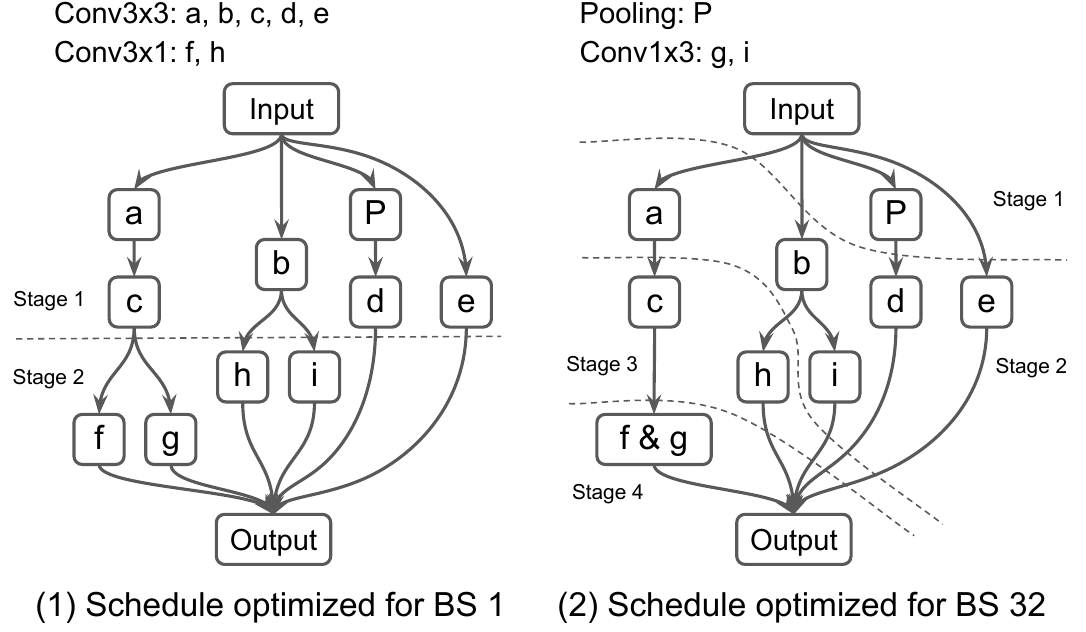}
    \vspace{-15pt}
    \caption{The schedule found by IOS for the last block of Inception V3. Operator a-e are convolution operator while operator P is the pooling operator. Schedule (1) and (2) are optimized for batch size 1 and 32, respectively. There are two stages in schedule (1) while there are 4 stages in schedule (2). Schedule (1) is 28\% faster than schedule (2) on batch size 1. Schedule (2) is 8\% faster than schedule (1) on batch size 32.}
    \label{fig:experiments_specialization_batchsize_demo}
\end{figure}

IOS discovers different schedules for different batch sizes. For example, Figure \ref{fig:experiments_specialization_batchsize_demo} shows the schedule of the last block of Inception V3 optimized for batch size 1 and 32, respectively. 
There are two stages in the schedule (1), which is optimized for batch size 1 while there are four stages in the schedule (2), which is optimized for batch size 32. 
The schedule (1) is 28\% faster than the schedule (2) on batch size 1, while the schedule (2) is 8\% faster than (1) on batch size 32. 
There are two differences between them. The first one is that convolution f and g in the schedule (2) are merged into a single convolution. 
This is because activation (the output tensor of an operator) is the memory bottleneck at large batch size. It is more crucial to reduce memory access, even at the cost of larger computation cost. 
Merging can reduce the memory access, because the merged kernel only access the output of convolution c once, instead of twice in the schedule (1).
However, because the kernel size of f and g are 3x1 and 1x3, respectively, their kernel size would be expanded to 3x3 by padding zeros, which increases the amount of computation.
Another difference between the schedule (1) and (2) is that the schedule (2) has more stages than the schedule (1). We found a similar phenomenon for large batch sizes because of resource contention. When multiple operators are executed on the device, there is a conflict over access to the shared resources such as the last-level cache, making the concurrent execution degrades the performance. This gets more severe for larger batch sizes because the demand for shared resources gets larger.

\subsection{Consistent Improvement for Different Batch Sizes}
\label{sec:experiments_larger_batchsize}
\begin{figure}[!h]
    \centering
    \includegraphics[width=0.95\linewidth]{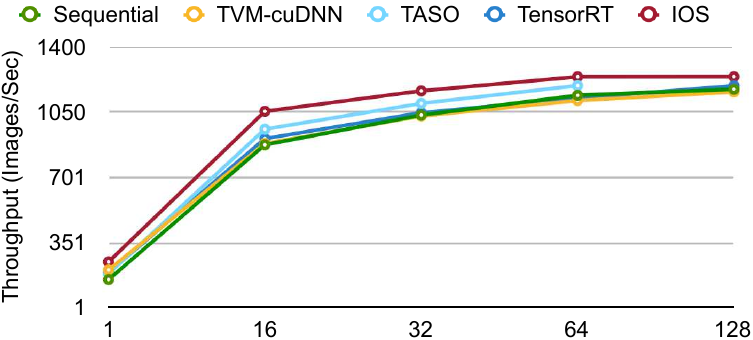}
    \vspace{-5pt}
    \caption{The throughput comparison of Sequential schedule, TVM-cuDNN, TASO, TensorRT and IOS on batch size 1 to 128 for Inception V3. TASO runs out of memory with batch size 128.}
    \label{fig:experiments_batchsize}
\end{figure}
In real-world applications, we need to handle different batch sizes for inference. For example, for real-time applications on edge devices, we usually use a batch size of one to reduce latency. In contrast, in cloud settings, the larger batch size is preferred to increase throughput. Changing the workload requires different inter-operator parallelization schedules. 
We optimize Inception V3 with the batch sizes of 1, 16, 32, 64, 128, and compare the throughput. Figure~\ref{fig:experiments_batchsize} shows that the throughput increases with the batch size. When the batch size is larger than 128, the performance saturates, and the throughput does not increase significantly anymore. 
The throughput of IOS outperforms all the baselines consistently on all batch sizes. 
Even though a larger batch size provides more data parallelism, we can still utilize inter-operator parallelism to further improve the throughput.

\subsection{Intra- and Inter-Operator Parallelism}
\label{sec:experiments_intra_inter}
\begin{figure}[H]
    \centering
    \includegraphics[width=1.0\linewidth]{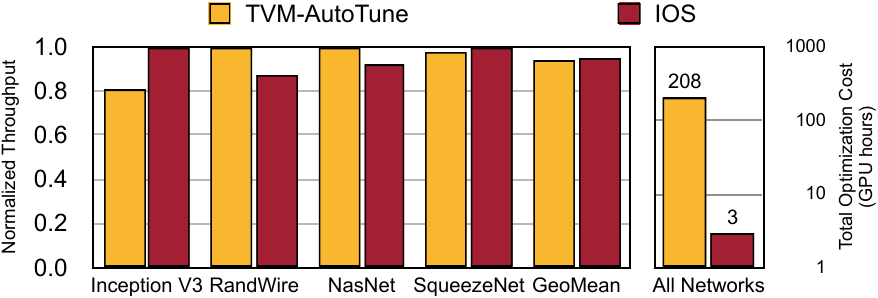}
    \vspace{-20pt}
    \caption{End-to-end performance comparison between TVM-AutoTune and IOS. TVM-AutoTune and IOS are {\em orthogonal} because TVM focuses on the intra-operator parallelism while IOS focuses on inter-operator parallelism. They can be combined to boost the inference performance further. The optimization cost of IOS is two orders of magnitude less than TVM.}
    \label{fig:expriments_tvmtune_ios}
    \vspace{-8pt}
\end{figure}

TVM exploits the intra-operator parallelism by searching the schedule for each kernel on a specific device.
IOS focuses on inter-operator parallelism and leaves the exploitation of intra-operator parallelism to cuDNN library.
Although intra- and inter-operator parallelism is {\em orthogonal} and can be combined, we compare TVM and IOS here to give some insight into each parallelism's benefit. 
As shown in Figure~\ref{fig:expriments_tvmtune_ios}, 
TVM takes $208$ GPU hours while IOS only takes $3$ GPU hours
to optimize the four networks.
IOS outperforms TVM on Inception V3 and SqueezeNet. 
This is because only utilizing intra-parallelism can not provide enough parallelism for the powerful computing device. 
Meanwhile, TVM outperforms IOS on Randwire and NasNet, 
because TVM finds more efficient kernels for separable convolutions, which occupy the majority of operators in Randwire and NasNet. 
We believe the combination of TVM and IOS would boost the performance further and leave this for future work.

\section{Conclusion}

With the increasing computational capacity, the sequential execution of CNNs no longer provides sufficient parallelization opportunities to fully utilize all the computation resources. We propose IOS that combines intra- and inter-operator parallelism and adapt dynamic programming to find an efficient schedule that better utilizes the hardware. Experiments show that IOS can improve the GPU utilization and speedup modern CNN inference from $1.1$ to $1.5\times$ compared to 
the state-of-the-art libraries (e.g., TensorRT).

\section*{Acknowledgements}

We want to thank Xiaodan (Serina) Tan for NVIDIA GPU-related issues and constructive discussion. 
This project was supported by the Canada Foundation for Innovation JELF grant, NSERC Discovery grant, AWS Machine Learning Research Award, Facebook Faculty Research Award, 
MIT-IBM Watson AI Lab, MIT Data Science and AI Lab (DSAIL), NVIDIA, and NSF CAREER Award \#1943349.

\bibliography{references/references,references/distributed,references/framework,references/cnnarch,references/nas,references/translation}
\bibliographystyle{mlsys2021}

\appendix
\clearpage
\section{Proof of Time Complexity}
\label{sec:appendix_time_complexity}

In this section of appendix, we prove the time complexity bound given in Section \ref{sec:method_algorithm_complexity_and_pruning_strategy}. 
In Section \ref{sec:appendix_time_complexity_preliminary}, we give some preliminary definitions and theorems used in our proof. In Section \ref{sec:appendix_time_complexity_ios}, we prove the time complexity of inter-operator scheduler (IOS).

\subsection{Preliminary Definitions and Theorems}
\label{sec:appendix_time_complexity_preliminary}
In this subsection, we give the definition of chain and anti-chain, Dilworth's theorem~\cite{dilworth}, and a corollary, which is used in our proof later.

\begin{definition}[Chain and antichain] 
A \textit{chain} is a subset of a partially ordered set such that any two distinct elements in the subset are comparable. An \textit{antichain} is a subset such that any two distinct elements in the subset are incomparable.
\end{definition}

\begin{definition}[Chain decomposition of partial order set]
A \textit{chain decomposition} of a partial order set is a partition of the elements of the ordered set into disjoint chains.
\end{definition}

\begin{theorem}[Dilworth's Theorem]
In any finite partially ordered set, the largest antichain has the same size as the smallest chain decomposition. 
\end{theorem}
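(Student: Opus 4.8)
The plan is to prove the two inequalities separately, writing $d$ for the size of the largest antichain and $k$ for the size of a smallest chain decomposition. The easy direction, $d \le k$, is immediate: fix a chain decomposition $P = C_1 \cup \dots \cup C_k$ into $k$ disjoint chains, and let $A$ be any antichain. Since the elements of a chain are pairwise comparable while those of an antichain are pairwise incomparable, each $C_i$ contains at most one element of $A$; hence $|A| \le k$, and taking $A$ to be a largest antichain gives $d \le k$. All the content is in the reverse inequality $k \le d$, i.e.\ exhibiting a chain decomposition with exactly $d$ chains.

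For $k \le d$ I would argue by induction on $|P|$ (the argument due to Galvin). The base case $P = \emptyset$ is trivial. For the inductive step, pick a maximal element $a \in P$ and set $P' = P \setminus \{a\}$, whose largest antichain has size $d'$ with $d - 1 \le d' \le d$ (removing one element drops the antichain number by at most one). If $d' = d - 1$, then by induction $P'$ decomposes into $d - 1$ chains, and appending the singleton chain $\{a\}$ yields $d$ chains covering $P$, as required. The delicate case is $d' = d$: here the induction hypothesis gives a decomposition $P' = C_1 \cup \dots \cup C_d$ into $d$ chains, and since any $d$-element antichain of $P'$ must meet each $C_i$ exactly once, I would try to locate a single chain whose removal drops the antichain number to $d - 1$, and then recurse.

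The crux, and the step I expect to be the main obstacle, is making that last choice correctly when $d' = d$. The idea is to let $a_i$ be the \emph{largest} element of $C_i$ lying in some $d$-element antichain of $P'$, and to prove that $A = \{a_1, \dots, a_d\}$ is itself an antichain; the point is the lattice fact that the elementwise join of two maximum antichains is again a maximum antichain, which forces both $a_i$ and $a_j$ into a common antichain and thereby rules out $a_i < a_j$. Granting this, $a$ must be comparable to some $a_i$ (otherwise $A \cup \{a\}$ would be an antichain of size $d + 1$ in $P$), and maximality of $a$ forces $a > a_i$. Then $K = \{x \in C_i : x \le a_i\} \cup \{a\}$ is a chain, and by the maximality in the definition of $a_i$ no $d$-element antichain of $P'$ can avoid $K$; hence $P \setminus K$ has antichain number at most $d - 1$, so by induction it splits into $d - 1$ chains, which together with $K$ give the desired $d$ chains.

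An alternative route, if one prefers to bypass the antichain-lattice lemma, is to reduce to K\"onig's theorem: build the bipartite graph on two copies $\{v^-\}$, $\{v^+\}$ of $P$ with an edge $u^- v^+$ whenever $u < v$, observe that a matching of size $m$ splices the ground set into $|P| - m$ chains (so the minimum number of chains equals $|P| - \nu$ for the maximum matching $\nu$), and then use a minimum vertex cover to extract an antichain of size $|P| - \nu$ among the elements untouched by the cover. Either way, combining $k \le d$ with the easy $d \le k$ yields the claimed equality.
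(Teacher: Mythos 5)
The paper does not actually prove this statement: Dilworth's theorem is imported as a classical result with a citation to Dilworth (1950), and the only thing the appendix proves is the corollary specializing it to DAGs (decomposing the vertex set of a width-$d$ graph into $d$ path-connected classes). So there is no paper proof to match against; what you have written is a self-contained proof of the black-boxed theorem, and it is correct. The easy inequality $d \le k$ is exactly right, and your induction for $k \le d$ is Galvin's argument: remove a maximal element $a$, split on whether the antichain number drops, and in the hard case extract the chain $K = \{x \in C_i : x \le a_i\} \cup \{a\}$ whose removal forces the antichain number down to $d-1$, then recurse. The one place you lean on heavier machinery than necessary is the claim that $A = \{a_1,\dots,a_d\}$ is an antichain: you invoke the lattice structure of maximum antichains (that the elementwise join of two maximum antichains is again a maximum antichain), which is true but is itself a lemma needing a counting argument. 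A lighter route to the same claim: if $a_i < a_j$ with $i \ne j$, take any $d$-element antichain $A_j$ of $P'$ containing $a_j$; it meets each chain exactly once, so it meets $C_i$ in a single element $x$ with $x \le a_i$ by the maximality defining $a_i$, whence $x \le a_i < a_j$ contradicts $x$ and $a_j$ being incomparable members of $A_j$. Your alternative reduction to K\"onig's theorem is also sound (it is Fulkerson's proof), and either version would serve as a complete proof of the theorem the paper cites without proof.
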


We apply the Dilworth's theorem to a directed acyclic graph and can get the following corollary.

\begin{corollary}
\label{corollary}
Let $G=(V, E)$ be a directed acyclic graph and $d$ be the width of $G$. We can decompose $V$ into $d$ sets such that any two vertices in the same set can be connected by a path in $G$.
\end{corollary}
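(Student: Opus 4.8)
The plan is to exhibit the required decomposition as the minimum chain decomposition of a carefully chosen partial order on $V$, and then invoke Dilworth's Theorem. First I would equip $V$ with the \emph{reachability order}: declare $u \preceq v$ whenever $u = v$ or there is a directed path from $u$ to $v$ in $G$. The key point to check is that $(V, \preceq)$ is genuinely a partial order. Reflexivity and transitivity are immediate from the definition of path-reachability, and antisymmetry is exactly where acyclicity of the DAG is used: if $u \preceq v$ and $v \preceq u$ with $u \neq v$, concatenating the two directed paths would produce a directed cycle, contradicting that $G$ is a DAG.

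Second I would reconcile the graph-theoretic notion of width with the order-theoretic notion of antichain. Under $\preceq$, two distinct vertices are \emph{incomparable} precisely when neither can reach the other, i.e. when no directed path connects them. Hence an antichain in $(V, \preceq)$ is exactly a set of operators no two of which are joined by a path, so the definition of the width $d$ of $G$ says precisely that the largest antichain of $(V, \preceq)$ has size $d$.

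Third I would apply Dilworth's Theorem to $(V, \preceq)$: the smallest chain decomposition has the same size as the largest antichain, namely $d$. This yields a partition of $V$ into $d$ chains $C_1, \dots, C_d$ (padding with empty sets should the minimum decomposition use fewer than $d$ chains). By definition of a chain, any two distinct vertices in the same $C_i$ are comparable under $\preceq$, which means one reaches the other by a directed path — exactly the property asserted by the corollary.

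I do not expect a serious obstacle, since the result is essentially a translation of Dilworth's Theorem into graph language; the only genuine content is the verification that reachability is a partial order (where acyclicity is essential) and the bookkeeping that incomparability in this order coincides with the ``no path'' condition in the width definition. The one subtlety worth stating explicitly is that chains may be singletons, so the decomposition can legitimately contain single-vertex (or, after padding, empty) sets, which is consistent with the corollary's claim of exactly $d$ sets.
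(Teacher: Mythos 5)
Your proof is correct and follows essentially the same route as the paper's: both form the reachability partial order (the transitive closure of $G$), identify antichains with path-independent sets so that the width of $G$ equals the largest antichain, and then apply Dilworth's Theorem to obtain the $d$-chain decomposition. Your version is slightly more careful in explicitly verifying antisymmetry via acyclicity and in noting that the decomposition may need padding, but the substance is identical.
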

\begin{proof}
Let $P=(V, E')$ be the partial order derived from $G$ by transitive closure. Then that two elements $u, v$ in $V$ are comparable in $P$ is equivalent to that there is a path between them in $G$. Thus, the width $d$ of $G$ equals the size of largest antichain of $P$. We apply the Dilworth's Theorem to $P$ and can get a decomposition of $V$ into $d$ chains in $P$: $S_1, S_2, \dots, S_d$. Because $S_i$ is a chain in $P$, any two elements in $S_i$ are comparable, which means there is a path bridge them in $G$. 
\end{proof}

\subsection{Time Complexity of Inter-Operator Scheduler}
\label{sec:appendix_time_complexity_ios}
In this subsection, we will prove the time complexity of IOS stated in Section ~\ref{sec:method_algorithm_complexity_and_pruning_strategy}. Then we will show that the upper bound can be reached by some computation graph.

\begin{lemma}
\label{lemma:ends}
If $S'_1$ ends $S$ and $S'_2$ ends $S-S'_1$, then $S'_1\cup S'_2$ also ends $S$ ($S'$ ends $S$ means that $S'$ is an ending of $S$).
\end{lemma}
\begin{proof}
We prove it by contradiction. If $S'_1\cup S'_2$ does not end $S$, there must exist $(u,v) \in E$ such that $u \in S'_1\cup S'_2$ and $v \in S-S'_1\cup S'_2$. Then we have $u\in S'_1$ or $u\in S'_2$. If $u \in S'_1$, we can get the contradiction that $S'_1$ is not an ending of $S$ because $v \in S-S'_1\cup S'_2 \subseteq S-S'_1$. If $u \in S'_2$, we can also get the contradiction that $S'_2$ is not an ending of $S-S'_1$ because $v \in S-S'_1\cup S'_2 = (S-S'_1)-S'_2$.
\end{proof}

\begin{lemma}
\label{lemma:S}
Let $S$ be a possible argument of \textproc{Scheduler}, we have $V-S$ ends $V$.
\end{lemma}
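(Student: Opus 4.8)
The plan is to induct on the structure of the recursion tree of \textproc{Scheduler}, using Lemma~\ref{lemma:ends} as the engine of the inductive step. The set of possible arguments is generated from the root call \textproc{Scheduler}($V$) by a single rule: whenever $S$ is a possible argument, every recursive call made inside \textproc{Scheduler}($S$) passes $S-S'$ for some ending $S'$ of $S$ (line~18 of Algorithm~\ref{alg}). So I want to show that the predicate ``$V-S$ ends $V$'' is established at the root and then preserved along every such transition.

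For the base case, the first argument is $S=V$, for which $V-S=\emptyset$. An ending of $V$ is a subset $S'$ all of whose cross edges with $V-S'$ are directed into $S'$; taking $S'=\emptyset$ there are no cross edges at all, so $\emptyset$ trivially ends $V$. Hence the claim holds for $S=V$.

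For the inductive step, suppose $S$ is a possible argument with $V-S$ ending $V$, and consider a child argument $S-S'$, where $S'$ is an ending of $S$. The key observation is the set identity $V-(S-S')=(V-S)\cup S'$, which holds because $S'\subseteq S\subseteq V$. Now I apply Lemma~\ref{lemma:ends} with ambient set $V$: set $S'_1=V-S$, which ends $V$ by the inductive hypothesis; then $V-S'_1=S$, and $S'_2=S'$ ends $S=V-S'_1$ by assumption. The lemma yields that $S'_1\cup S'_2=(V-S)\cup S'$ ends $V$, i.e., $V-(S-S')$ ends $V$, which is exactly the claim for the child $S-S'$. Since every possible argument other than $V$ arises as such a child, this completes the induction.

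The only genuine work is spotting the set identity $V-(S-S')=(V-S)\cup S'$ and matching the hypotheses of Lemma~\ref{lemma:ends} to the right substitution ($S'_1=V-S$, $S'_2=S'$); once that is in place the induction closes immediately, so I do not anticipate a substantive obstacle. I would also remark that the pruning strategy $P$ only suppresses some transitions and never produces a child $S-S'$ with $S'$ failing to be an ending of $S$, so it has no effect on the argument.
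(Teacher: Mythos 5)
Your proof is correct and follows essentially the same route as the paper's: the paper unrolls the recursion as $S = V - \bigcup_{i=1}^{m} S'_i$ and "repeatedly applies" Lemma~\ref{lemma:ends}, which is exactly your induction on the recursion tree with invariant ``$V-S$ ends $V$'' and the substitution $S'_1 = V-S$, $S'_2 = S'$. Your version merely makes the base case and the set identity $V-(S-S') = (V-S)\cup S'$ explicit, which the paper leaves implicit.
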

\begin{proof}
We can rewrite $S$ as $ S = V - \bigcup_{i=1}^{m}S'_i$, where $m \geq 0$ and $S'_k$ ends $V-\bigcup_{i=1}^{k-1}S'_i$ according to L17 in Algorithm~\ref{alg}. By repeating apply Lemma \ref{lemma:ends}, we can get that $\bigcup_{i=1}^m S'_i$ ends $V$, which means $V-S$ ends $V$.
\end{proof}

\begin{lemma}
    \label{lemma:seq}
    Let $V'$ be a subset of $V$ and any two operators in $V'$ are bridged by a path. Let $c$ be the size of $V'$. Then 
    \small
    $$
    |\{ (S\cap V', S'\cap V') \mid \text{$S'$ ends $S$, $V-S$ ends $V$ } \}| = \binom{c+2}{2}
    $$
    \normalsize
\end{lemma}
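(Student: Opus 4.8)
The plan is to exploit the fact that $V'$, having the property that every two of its operators are joined by a path, is a \emph{chain} in the reachability order of $G$. Hence I can list its elements as $v_1, v_2, \dots, v_c$ in a topological order, so that there is a directed path from $v_a$ to $v_b$ whenever $a < b$. I would then argue that both $S\cap V'$ and $S'\cap V'$ are forced to be intervals of this chain, count the admissible intervals, and finally exhibit witnesses showing that every admissible interval actually occurs, so that the count is exactly $\binom{c+2}{2}$.

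First I would pin down the shape of $S\cap V'$. Since $V-S$ ends $V$, the definition of ending gives no edge from $V-S$ into $S$, so $S$ is closed under predecessors and therefore under all ancestors; that is, $S$ is a down-set of the reachability order. Applied to the chain, if $v_b\in S$ and $a<b$ then $v_a$ is an ancestor of $v_b$ and so lies in $S$. Hence $S\cap V'$ is a prefix $\{v_1,\dots,v_j\}$ for a unique $0\le j\le c$.

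Next I would pin down $S'\cap V'$. Because $S'$ ends $S$, there is no edge from $S'$ to $S-S'$, so $S'$ is closed under successors that remain inside $S$. The key point is that if $v_a\in S'$ and $a<b\le j$, then the whole path from $v_a$ to $v_b$ stays inside $S$: its intermediate vertices are ancestors of $v_b\in S$, and $S$ is a down-set. Propagating successor-closure along this path forces $v_b\in S'$. Thus $S'\cap V'$ is upward closed inside the prefix $\{v_1,\dots,v_j\}$, i.e. a suffix $\{v_{i+1},\dots,v_j\}$ for a unique $0\le i\le j$. Consequently every admissible pair $(S\cap V', S'\cap V')$ is determined by indices $(i,j)$ with $0\le i\le j\le c$, and distinct index pairs give distinct set pairs (since $j=|S\cap V'|$ and $i=j-|S'\cap V'|$); this already bounds the count by the number of such index pairs, which is $\binom{c+2}{2}$.

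It remains to show each pair $(i,j)$ is realized, giving the matching lower bound and hence equality. For fixed $0\le i\le j\le c$ I would take $S$ to be the ancestor-closure of $v_j$ (and $S=\emptyset$ when $j=0$): this is a down-set with $S\cap V'=\{v_1,\dots,v_j\}$, using acyclicity to exclude $v_k$ with $k>j$. Then I would take $S'$ to be the set of vertices of $S$ reachable from $v_{i+1}$ (with $S'=\emptyset$ when $i=j$); this is closed under successors within $S$, hence ends $S$, and meets $V'$ in exactly $\{v_{i+1},\dots,v_j\}$, again by acyclicity. The main obstacle is the third step above: correctly transferring the edge-level ending condition to the path level and verifying that the relevant paths never leave $S$. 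Everything else is bookkeeping together with a standard count of the pairs $0\le i\le j\le c$.
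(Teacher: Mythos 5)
Your proof is correct and follows essentially the same route as the paper's: order the chain $V'$ topologically, show $S\cap V'$ must be a prefix and $S'\cap V'$ a suffix of that prefix, and count the pairs to get $\binom{c+2}{2}$. You are in fact more careful than the paper on two points it glosses over --- propagating the edge-level ending conditions along paths that may leave $V'$ (which works because $S$ is ancestor-closed), and exhibiting witnesses so that the count is an equality rather than just the upper bound actually used in the complexity theorem.
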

\begin{proof}
    Because any two operators in $V'$ is bridged by a path in $G$, operators in $V'$ are ordered sequentially. Because $V-S$ ends $V$, there are only $c+1$ possible sets of $S\cap V'$ because $S$ must be a prefix in the sequential ordered operators, including empty set. $S'\cap V'$ is a suffix of $S\cap V'$, including empty set. Then there are $\sum_{i=0}^{c}\sum_{j=0}^{i}1=\frac{(c+2)(c+1)}{2} = \binom{c+2}{2}$ possible pairs of $(S\cap V', S'\cap V')$.
\end{proof}

\begin{theorem}
The time complexity of inter-operator scheduler is $\CO(\binom{n/d+2}{2}^d)$, which can be relaxed to $\CO((\frac{n}{d}+1)^{2d})$, where $n$ is the number of operators in the computation graph and $d$ is its width.
\end{theorem}

\begin{proof}
We only need to count the number of pairs of $(S, S')$ that can reach L17 of Algorithm~\ref{alg} because L17-21 dominates the execution time of the scheduler, where $S$ is a subset of $V$ that is taken as the argument of \textproc{Scheduler} and $S'$ is an ending of $S$. By Lemma \ref{lemma:S}, $V-S$ ends $V$. By Corollary \ref{corollary}, we can decompose $V$ into $d$ disjoint partitions $V_1, V_2, \dots, V_d$ and any two operators $u, v$ in the same partition can be bridged by a path in $G$. We can build a one-to-one mapping that maps pair $(S, S')$ to $2d$-dimension tuple $(S\cap V_1, S'\cap V_1, \dots, S\cap V_d, S'\cap V_d)$ based on the partition. Then we only need to count the number of valid tuples to get the number of valid pairs. By Lemma \ref{lemma:seq}, the possible number of pairs $(S\cap V_i, S'\cap V_i)$ is $\binom{c_i+2}{2}$. Then an upper bound of the tuples is $\prod_{i=1}^d \binom{c_i+2}{2}$. It is an upper bound but not the exact number because currently we only consider the dependency inside each partition $V_i$ and ignored the dependency between different partitions. So the upper bound of the number of pairs of $(S, S')$ is $\prod_{i=1}^d \binom{c_i+2}{2}$. It can be relaxed to $\binom{n/d+2}{2}^d$ because $\sum_{i}^d c_i = n$ and it is maximized when $c_i$ are equal. For simplicity, it can be further relaxed to $(\frac{n}{d}+1)^{2d}$.
\end{proof}

\begin{figure}[ht]
    \centering
    \includegraphics[width=0.5\linewidth]{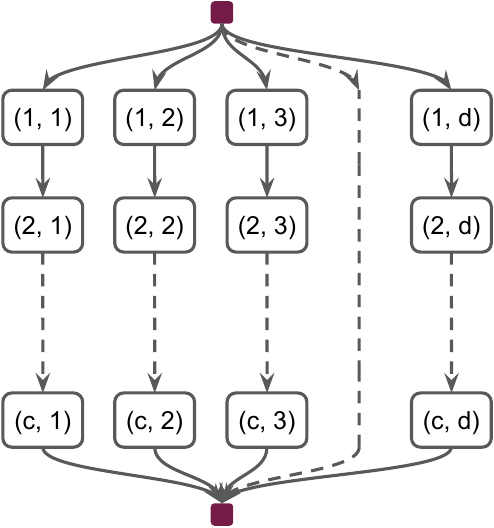}
    \caption{The example to make the time complexity $\CO(\binom{n/d+2}{2}^d)$ tight. The time complexity for this graph is $\CO(\binom{c+2}{2}^d)$}
    \label{fig:appendix_alg_example}
\end{figure}
The computation graph shown in Figure~\ref{fig:appendix_alg_example} is an example to demonstrate that the time complexity of $\CO(\binom{n/d+2}{2}^d)$ can be reached.

In this example, there are $d$ independent paths and each path has $c$ operators. Because the paths are independent with each other and there is no edge between two different paths, we can get the upper bound $\CO(\binom{c+2}{2}^d)$ by the analysis in above time complexity proof.

\section{Speedup on NVIDIA RTX 2080Ti}
\label{sec:appendix_2080ti}
\begin{figure}[h]
    \centering
    \includegraphics[width=\linewidth]{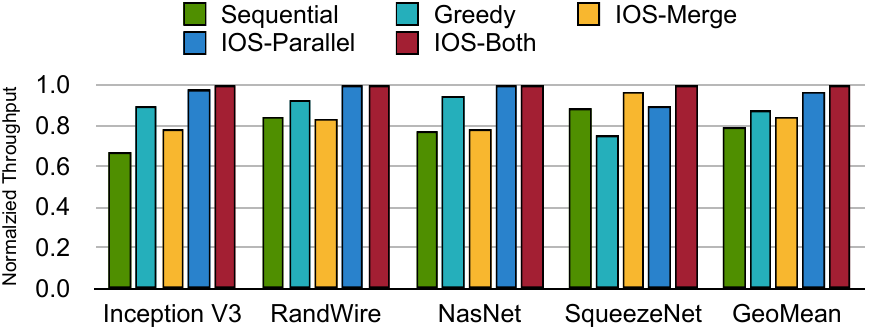}
    \vspace{-14pt}
    \caption{End-to-end performance comparison of different schedules across different CNNs on batch size one. The throughput is normalized to the best one for each model. This experiment is conducted on NVIDIA RTX 2080Ti.}
    \label{fig:appendix_schedule}
\end{figure}
\begin{figure}[h]
    \centering
    \includegraphics[width=\linewidth]{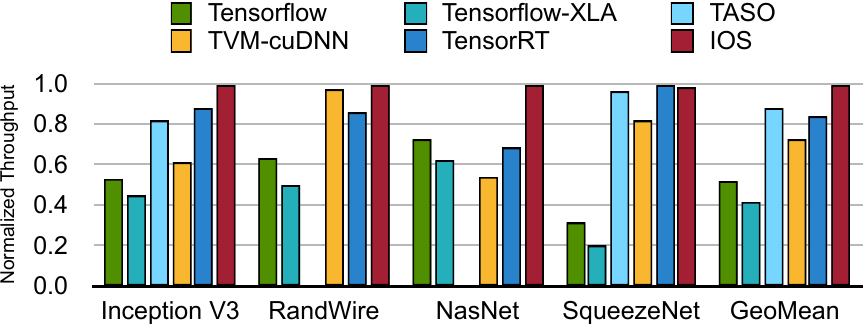} 
    \vspace{-14pt}
    \caption{End-to-end performance comparison of
    different frameworks 
    across different CNNs on batch size one. The throughput is normalized to the best one for each model. This experiment is conducted on NVIDIA RTX 2080Ti.}
    \label{fig:appendix_frameworks_cudnn}
\end{figure}

In addition to results on NVIDIA Tesla V100 (Volta architecture), we also conduct experiments on NVIDIA RTX 2080Ti (Turing architecture) to show that our optimization is generally effective across different GPU architectures. We use the same models and baselines for comparisons as in Section~\ref{sec:experiments_schedules} and Section~\ref{sec:experiments_frameworks}.

Figure~\ref{fig:appendix_schedule} shows that IOS with two parallelization strategies (i.e., IOS-Both) outperforms all other schedules. In particular, IOS-Both achieves $1.1\times$ to $1.5\times$ speedup comparing to the sequential schedule. 
Figrue~\ref{fig:appendix_frameworks_cudnn} shows that IOS outperforms all other cuDNN-based frameworks\footnote{TASO runs out of GPU memory on NVIDIA 2080Ti for RandWire and NasNet.} on Inception V3, RandWire, and NasNet. IOS achieves comparable performance with TASO and TensorRT on SquuezeNet. These results align with the results on V100.

\section{Block-wise Speedup}
\label{sec:experiments_blockwise}
\begin{figure}[h!]
    \centering
    \includegraphics[width=0.96\linewidth]{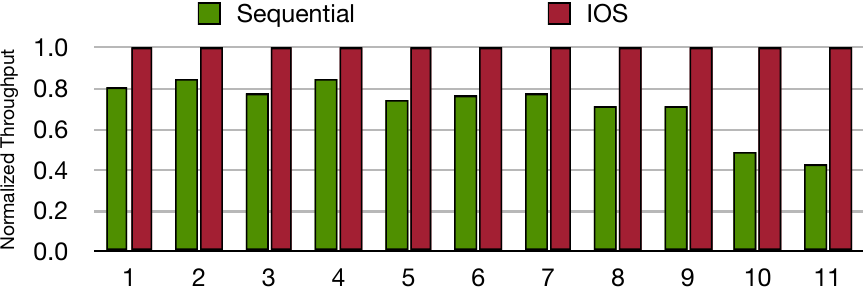}
    \caption{IOS consistently outperforms sequential executions on each block of Inception-v3. 
    }
    \label{fig:experiments_block_speedup}
\end{figure}

To explore the speedup for different blocks,  we compare the performance of each block of Inception-V3~\cite{szegedy2016rethinking} between sequential and IOS schedule (Figure~\ref{fig:experiments_block_speedup}). IOS consistently runs faster than the sequential schedule. The speedup for the individual block is up to 2.3$\times$, and the end-to-end speedup is 1.6$\times$. More speedup is achieved for back blocks because the width gets larger and more inter-parallelism is possible.

\end{document}